%%%%%%%%%%%%%%%%%%%%%%%%%%%%%%%%%%%%%%%%%%%%%%%%%%%%%%%%%%%%%%%%%%
%%%%%%%% ICML 2014 EXAMPLE LATEX SUBMISSION FILE %%%%%%%%%%%%%%%%%
%%%%%%%%%%%%%%%%%%%%%%%%%%%%%%%%%%%%%%%%%%%%%%%%%%%%%%%%%%%%%%%%%%

% Use the following line _only_ if you're still using LaTeX 2.09.
%\documentstyle[icml2014,epsf,natbib]{article}
% If you rely on Latex2e packages, like most moden people use this:
\documentclass{article}

% use Times
\usepackage{times}
% For figures
\usepackage{graphicx} % more modern
\usepackage{subfigure} 

% For citations
\usepackage{natbib}

% For algorithms
\usepackage{algorithm}
%\usepackage{algorithmic}

% As of 2011, we use the hyperref package to produce hyperlinks in the
% resulting PDF.  If this breaks your system, please commend out the
% following usepackage line and replace \usepackage{icml2014} with
% \usepackage[nohyperref]{icml2014} above.
\usepackage{hyperref}

% Packages hyperref and algorithmic misbehave sometimes.  We can fix
% this with the following command.
%%%PP%%%%% \newcommand{\theHalgorithm}{\arabic{algorithm}}

% Employ the following version of the ``usepackage'' statement for
% submitting the draft version of the paper for review.  This will set
% the note in the first column to ``Under review.  Do not distribute.''
%\usepackage{icml2014} 
% Employ this version of the ``usepackage'' statement after the paper has
% been accepted, when creating the final version.  This will set the
% note in the first column to ``Proceedings of the...''
\usepackage[accepted]{icml2014}

\usepackage{ulem} % pour \sout

% The \icmltitle you define below is probably too long as a header.
% Therefore, a short form for the running title is supplied here:
\icmltitlerunning{Bootstrapped evaluation of contextual bandit algorithms}

%%%%%%%%%%%%%%%%%%%
%% our own stuff %%
%%%%%%%%%%%%%%%%%%%
%%%\usepackage{subfigure} 

% For algorithms
%%%\usepackage{algorithm}
%\usepackage{algpseudocode}

\usepackage{amsfonts,amsmath}

\usepackage[utf8]{inputenc} 
\newcommand{\D}{{\cal D}}
\newcommand{\proba}{\mathbb{P}r}
\usepackage{array}

% PP'S stuffs
\usepackage[usenames,dvipsnames]{xcolor}
%\newcommand{\pp}[1]{\color{red}#1\color{black}}

% end of PP's stuffs
\usepackage{graphics}
\usepackage{xspace}

\newcommand{\knownRatings}{{\cal S}}

\graphicspath{{figures/}}
\usepackage{tikz}
\usetikzlibrary{arrows,snakes,shapes}

%%%%%%%%%%%Nos raccourcis

\renewcommand{\epsilon}{\varepsilon}

\newcommand{\E}{\mathbb{E}}

\newcommand{\R}{\mathbb{R}}

\def\beglab{\begin{equation} \label}
\def\endlab{\end{equation}}
\def\beglabc{\begin{equation*} }
\def\endlabc{\end{equation*}}

\newcommand{\SoA}{{\cal A}}

\newcommand{\keepForLongVersionIfAny}[1]{}
%% end of own stuff

%math
\usepackage{amsthm}
\newcommand\isdef{\stackrel{\text{\tiny def}}{=}}

\usepackage{amsthm}
\newtheorem{theorem}{Theorem}

% for comments 
\usepackage[usenames,dvipsnames]{xcolor}

% macro notations

\newcommand\algo{A}
\newcommand\reward[1]{\vec{r_t}[#1]}

\begin{document}

\twocolumn[
\icmltitle{Improving offline evaluation of contextual bandit
  algorithms \\ via bootstrapping techniques}% using CTR data : Reducing the bias}
\icmlauthor{Olivier Nicol}{oli.nicol@gmail.com}
%\icmladdress{University of Lille / LIFL (CNRS) \& INRIA Lille Nord Europe, 59650 Villeneuve d'Ascq, France}
\icmlauthor{Jérémie Mary}{jeremie.mary@inria.fr}
%\icmladdress{University of Lille / LIFL (CNRS) \& INRIA Lille Nord Europe, 59650 Villeneuve d'Ascq, France}
\icmlauthor{Philippe Preux}{philippe.preux@univ-lille3.fr}
\icmladdress{University of Lille / LIFL (CNRS) \& INRIA Lille Nord Europe, 59650 Villeneuve d'Ascq, France}

% You may provide any keywords that you 
% find helpful for describing your paper; these are used to populate 
% the "keywords" metadata in the PDF but will not be shown in the document
\icmlkeywords{off-line evaluation, recommender systems, contextual bandit, bootstrap}

\vskip 0.3in
]

\begin{abstract} In many recommendation applications such as news
recommendation, the items that can be recommended come and go at a
very fast pace.
This is a challenge for recommender systems (RS) to face this setting.
% In this context, it is very hard to build recommender
%systems (RS) based on the classical methods {\color{red}XXXc'est
%  koi?XXX} and 
  Online learning algorithms seem to be the most straight
forward solution. The contextual bandit framework was introduced for
that very purpose. In general the evaluation of a RS is a critical
issue. Live evaluation is often avoided due to the potential loss of
revenue, hence the need for offline evaluation methods. Two options
are available. Model based methods are biased by nature and are thus
difficult to trust when used alone. Data driven methods are therefore
what we consider here. Evaluating online learning algorithms with past
data is not simple but some methods exist in the
literature. Nonetheless their accuracy is not satisfactory mainly due
to their mechanism of data rejection that only allow the exploitation
of a small fraction of the data. We precisely address this issue in
this paper. After highlighting the limitations of the previous
methods, we present a new method, based on bootstrapping
techniques. This new method comes with two important improvements: it
is much more accurate and it provides a measure of quality of its
estimation. The latter is a highly desirable property in order to
minimize the risks entailed by putting online a RS for the first
time. We provide both theoretical and experimental proofs of its
superiority compared to state-of-the-art methods, as well as an
analysis of the convergence of the measure of quality.
\end{abstract}

\section{Introduction}

Under various forms, and under various names, recommendation has
become a very common activity over the web. One can think of movie
recommendation (Netflix), e-commerce (Amazon), online advertising
(everywhere), news recommendation (Digg), personalized radio stations
(Pandora) or even job recommendation (LinkedIn)... All these
applications have their own characteristics. Yet the common key idea
is to take advantage of some information we may have on a user
(profile, demographics, time of the day \textit{etc.}) in order to
identify the most attractive content to serve him/her in a given
context. Note that an element of content is generally referred to as
an item. To perform recommendation, a piece of software called a
recommender system (RS) can use past user/item interactions 
such as clicks or ratings. In particular, the typical approach to
recommendation is to train a predictor of ratings and/or clicks of
users on items on past data and use the resulting predictions to make
personalized recommendations. This approach is based on the implicit
assumption that past behavior can be used to predict future behavior.

In this paper we consider a recommendation applications in which the
aforementioned assumption is not reasonable. We refer to this setting
as dynamic recommendation. In dynamic recommendation, only a few tens
of different items are available for recommendation at any given
moment. These items have a limited lifetime and are continuously
replaced by new ones, with different characteristics. We also consider
that the tastes of users change, sometimes dramatically due to
external parameters that we do not control. Many examples of dynamic
recommendation exist on the web. The most popular one is news
recommendation that can be found on specialized websites such as Digg
or on general web portals (Yahoo!) and websites of various media
(newspapers, TV channels...). Other examples can be mentioned such as
private auctions in which the user can buy a limited set of items that
changes everyday. Another example is a RS that can only recommend the
K most recent items (this may apply to movies, videos, songs...). This
problem has begun to be addressed recently with online learning
solutions, by considering the contextual bandit framework. Nonetheless
this is not the case in most of the recommendation literature. In all
the textbooks, dynamic recommendation is handled with content-based
recommendation. The idea is to consider an item as a set of features
and to try to predict the taste of a user with regards to these
features, using an offline predictor as before. Yet we argue that
although this can be a good idea in some special cases, this is not
the way to go in general:
\begin{enumerate}
  \item It requires a continuous labeling effort of new items.
  \item We are limited by what the expert labels: things can be hard
    to label such as the appeal for a picture, the quality of a textual
    summary, \textit{etc.}%the perceived value for money...
  \item Tastes are not static: the appeal of a user to some kind of
    news can be greatly impacted by the political context. Similarly
    the appeal towards clothing can be impacted by fashion, movie
    stars \textit{etc.}
\end{enumerate}

For such systems, the best way to compare the performance of two
algorithms is to perform A/B testing on a subset of the web audience
\cite{ABtesting}. Yet there is almost no e-commerce website that would
let a new RS go live just for testing, even on a small portion of the
audience for fear of hurting the user experience and loosing
money. The entailed engineering effort can also be
discouraging. Therefore being able to evaluate offline a RS is
crucial. In classical recommendation, the measures of prediction
accuracy and other metrics that can be computed on past data are well
accepted and trusted in the community. Nevertheless for the reasons we
gave above, they are irrelevant for online learning algorithms
designed for dynamic recommendation. This paper is about the offline
evaluation of such algorithms. Some fairly simple replay methodologies
do exist in the literature. Nonetheless they have a well known, yet
very little studied drawback which is that they use a very small
portion of the data at hand. One may argue that the tremendous amount
of data available with web applications makes this a marginal
problem. Yet in this paper we will exhibit that this is a \emph{major}
issue that generates a huge bias when evaluating online learning
algorithms for dynamic recommendation. Furthermore we will explain
that acquiring more data \emph{does not} solve the problem. Then we
will propose a solution to this issue, that builds on the previously
introduced methods and on different elements of bootstrapping
theory. This solution is backed by a theoretical analysis as well as
an empirical study. They both clearly exhibit the improvement in terms
of evaluation accuracy brought by our new method. Furthermore the use
of bootstrapping allows us to estimate the distribution of our
estimation and therefore to get a sense of its quality. This is a
highly desirable property, especially considering that such an
evaluation method is designed in order to decide whether we should
risk putting a new algorithm online. The fast theoretical convergence
of this estimation is also proved in our analysis. Note that the
experiments are run on synthetic data, for reasons that we will detail
and also on a large publicly available dataset.

\section{Background on bandits and notations}

We motivated the need for online learning solutions in order to deal
with dynamic recommendation. A natural
way to model this situation is as a reinforcement learning problem
\cite{sutton1998reinforcement}, and more precisely using the
contextual bandit framework \cite{contextualMAB:ai-stats2010} that was
introduced for the very purpose of news recommendation.

\subsection{Contextual Bandits}
\label{cbandits}

The bandit problem is also known in the literature as the multi-arm
bandit problem and other variations. This problem can be traced back
to Robbins and Munro in 1952 \cite{Robbins:1952} and even Thompson in
1933 \cite{thompson1933}. There are many variations in the definition
of the problem; the contextual bandit framework as it is defined in
\cite{DBLP:conf/nips/LangfordZ07} is as follows:

Let $\cal X$ be an arbitrary input space and
$\SoA=\left\{1..K\right\}$ be a set of $K$ actions. Let $\D$ be
a distribution over tuples $(x, \vec{r})$ with $x \in {\cal X}$ and
$\vec{r} \in \left\{0,1\right\}^K$ a vector of rewards: in the $(x,
\vec{r})$ pair, the $j^{\mbox{\scriptsize th}}$ component of $\vec{r}$
is the reward associated to action $a_j$, performed in the context
$x$. In recommendation, the context is the information we have on the
user (session, demographics, profile \textit{etc.}) and an action is
the recommendation of an item.

A contextual bandit game is an iterated game in which at each round
$t$:
\begin{itemize}
  \item $(x_t,\vec{r_t})$ is drawn from $\D$.
  \item $x_t$ is provided to the player.
  \item The player chooses an action $a_t \in \SoA$ based on
    $x_t$ and on the knowledge it gathered from the previous rounds of
    the game.
  \item The reward $\reward{a_t}$ is revealed to the player whose
    score is updated.
  \item The player updates his knowledge based on this new experience.
\end{itemize}

It is important to note the partial observability of this game:
the reward is known only for the action performed by the player. This
is what makes offline evaluation a complicated problem. A typical goal
is to maximize the sum of reward obtained after $T$ steps. To
succeed a player has to learn about $\D$ and try to exploit that
information. Therefore at time $t$, a player faces a classical
exploration/exploitation dilemma: either perform an action he is
uncertain about in order to improve his model of $\D$ (explore), either perform
the action he believe to be optimal, although it may not be (exploit). 

A simpler variant of this problem in which no contextual information
is given, called the multi armed bandit problem (MAB) was extensively
studied in the literature. One can for instance mention the UCB
algorithm \cite{Auer02finite-timeanalysis} that optimistically deals with the dilemma by
performing the action with higher upper confidence bound on the
estimated reward expectation. The contextual bandit problem is less
studied due to the additional complexity and additional assumptions
entailed by the context space. The most popular algorithm is without
doubt LinUCB \cite{LinUCB}, although a few others exist such as
epoch-greedy \cite{DBLP:conf/nips/LangfordZ07}. LinUCB is basically an
extension of the classical UCB that uses the contexts under the
assumption of normality and that ${\cal X}=\R^d$. The reward expectation of an
action is estimated via a linear regression on the context vectors and
the confidence bound is computed using the dispersion matrix of the
context vectors. These two state-of-the-art algorithms are the ones we
will evaluate when running experiments.

\subsection{Evaluation}

We define a contextual algorithm $\algo$ as taking as input an
ordered list of $(x,a,r)$ triplets (history) and outputting a policy $\pi$. A policy
$\pi$ maps $\cal{X}$ to $\SoA$, that is chooses an action given a
context. Note that we are also interested in evaluating policies. In
our setting which is the most popular one, an algorithm is said
optimal when maximizing the expectation of the sum of rewards after $T$
steps: 
$$
  G_\algo(T,\D) \isdef \E_\D \sum_{t=1}^T \reward{\algo(x_t)} .
$$
For convenience, we define the per-trial payoff as the average click
rate after $T$ steps:
$$
g_\algo(T,\D) \isdef \frac{G_\algo(T)}{T}
$$
Note that for a static algorithm (\textit{ie.} that always outputs the
same policy $\pi$), we have that:
$$
\forall T, g_\algo(T,\D) = g_\algo(1,\D) \isdef g_\algo(\D).
$$
Note that from this point on, we will simplify the notations by
systematically dropping $\D$. \textbf{$g_\algo(T)$ is thus the quantity
  we wish to estimate as the measure of performance of a bandit algorithm.}

In order to minimize the risks entailed by playing live a new
algorithm, we are also interested in the quality of the
estimation. Bootstrapping will enable us to estimate it. To do so we need
additional notations. $CTR_\algo(T)$ denotes the distribution of the
per-trial payoff of $\algo$ after $T$ steps (so $g_\algo(T)$ is its
mean). Besides estimating $g_\algo(T)$, our second goal is the
computation of an estimator quality assessment
$\xi\left(CTR_\algo(T)\right)$. Note that typically, $\xi$ can be a
quantile, a standard error, a bias or what we will consider here for
simplicity: a confidence region around the mean of $CTR_\algo(T)$ (aka $g_A$).  

\section{The time acceleration issue with replay methodologies}

This section describes the replay methodology, that we call
\textit{replay} and that was introduced by
\cite{Langford_ExploScav_08} and analyzed for the setting we consider
by \cite{LiCLW11}. This section also highlights the method's
limitations that we overcome in this paper and is crucial to understand the
significance of our contribution. 

First of all, as \cite{LiCLW11}, we assume that we have a dataset $S$
that was acquired online using an random uniform allocation of the actions for $T$ steps. This data collection phase can be referred as \textit{exploration policy} and is our unique information on $\D$. 
%via $T$ choices of actions 
%interactions of a randomly uniform with $\D$. 
% logging
%policy{\color{red}pas sûr que le lecteur comprenne ce que cela veut
%  dire} . 
  This  random decision making implies that any point in ${\cal X} \times \SoA$ has a non null probability to belong to $S$; this allows the evaluation of any policy. In a nutshell, the replay
methodology on such a dataset works as follows: for each record
$(x_t,a_t,r_t)$ in $S$, the algorithm $\algo$ is asked to choose an
action given $x_t$. If this action is $a_t$, $r_t$ is revealed to
$\algo$ and taken into account to evaluate its performance. If the
action is different, the record is discarded. This method is proved to
be unbiased in some cases \cite{LiCLW11}. Note that the fact it needs
the data to be acquired uniformly at random is quite restrictive. This
problem is well studied and \textit{replay} can be extended to allow
the use of data acquired by a different but known logging policy at a
cost of increased variance \cite{Langford_ExploScav_08}. Some work has
been done to reduce this variance and even allow the use of data for
which the logging policy is unknown
\cite{DBLP:journals/corr/abs-1103-4601,  DBLP:conf/nips/StrehlLLK10}. Note also that if the evaluated bandit
algorithm is close from the logging policy, we may even further reduce
the variance \cite{tr-bottou-2012}. Finally there exist ways to adapt this method
to the case where a list of items can be recommended \cite{Langford_ExploScav_08}. Although we do not take into
account these considerations and keep the simplest assumption in this
paper for clarity, our method is based on the same ideas as
\textit{replay} and could therefore be extended similarly as what is
presented in the works we just cited.

Another issue with \textit{replay} is well-known but not studied at
all up to our knowledge. In average, only $\frac{T}{K}$ records are
used. Therefore \textit{replay} outputs an estimate
$\hat{g_\algo}\left(\frac{T}{K}\right)$ which follows the distribution
$CTR_\algo\left(\frac{T}{K}\right)$ of mean
$g_\algo\left(\frac{T}{K}\right)$. It is important to have in mind
that $g_\algo\left(\frac{T}{K}\right) = g_\algo\left(T\right)$
\emph{if and only if} $T=+\infty$ or $\algo$ is a static policy. See
figure \ref{fig:biased_replay} for a visual example of this problem.
Note that in \emph{any situation} except $K=1$,
$CTR_\algo\left(\frac{T}{K}\right) \neq CTR_\algo\left(T\right)$, and
the same thing goes for the confidence region $\xi$.

One may argue that when evaluating a RS, plenty of data is
available and therefore that $T$ is almost infinite. Consequently one may
also consider \textit{replay} to be almost unbiased. This is true with
the classical contextual bandit framework considered in the literature. With dynamic
recommendation, the main application for this method, this could not
be more wrong. Indeed, we argued that in this context, everything
changes, especially the available items. For instance, in news
recommendation a news remains active from a few hours to two days
and its CTR systematically decreases over time
\cite{Agarwal:2009:SME:1526709.1526713}. Moreover we also mentioned
reasons to believe that the user tastes may change as well. Therefore
when evaluating a contextual bandit algorithm, we want to evaluate its
behavior against a very dynamic environment and in particular its
ability to adapt to it. The use of replay in such a context is often
justified by the fact that the environment can be considered static
for small periods of time. This is not necessary but makes the
understanding of our point easier. When an algorithm faces a "static" region of
a dataset of size $T_i$, when being replayed, it only has
$\frac{T_i}{K}$ instead of $T_i$ steps to learn and exploit that
knowledge. It is impossible to solve this problem by considering more data
since new data would concern the next region, where different news
with different CTR are available for recommendation, and potentially
users with different tastes. In fact whatever
assumptions we use to characterize how things evolve, using
replay is equivalent to playing an algorithm with time going $K$ times
faster than in reality. This generates a huge bias. Note that it is
most likely because of time acceleration that a non-contextual
algorithm which looks a lot like UCB won a challenge evaluated by
\textit{replay} on the Yahoo! R6B dataset \cite{yahooToday} (news
recommendation). See chapter 4 of Nicol \cite{nicolEval14} for more
details. 

As a conclusion, we consider in this work a classical contextual
bandit framework with a fixed number of steps $T$. We assume that no
more than $T$ records can be acquired. % {\color{red}ce qui suit me paraît compliquer la phrase inutilement} although in this context,
%getting more data would allow to reduce the bias generated by time
%acceleration. 
Yet it is clear that if we manage to deal with this
problem without adding data, we would also be able to deal as well with the
problem of evaluating dynamic recommendation for which using more
data may not be possible.

\begin{figure}[ht]
      \centering
      \includegraphics[width=0.4\textwidth]{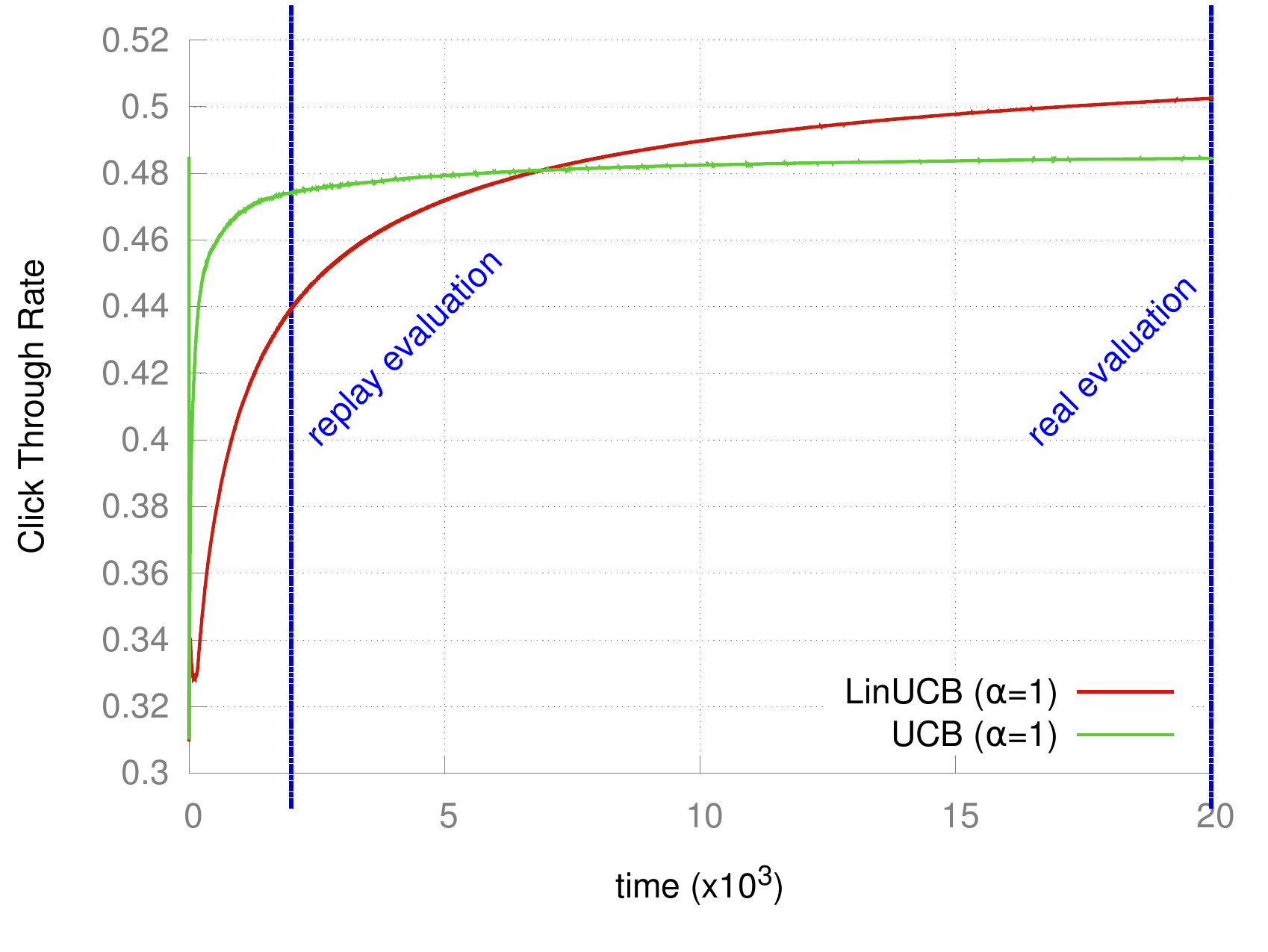}
  \caption{CTR over time of LinUCB (good model, slow learning) and UCB
    (too simple model but fast learning) when played $T=20,000$
    times on $\cal D$ (see section \ref{sec:artif} for the
    model). Replay emulates only $\frac{T}{K}$ steps (see blue line)
    which is clearly misleading if we are interested in performance
    over $T$ steps.}
\label{fig:biased_replay}
\end{figure}

\section{Bootstrapped Replay on Expanded Data}

Now that the shortcoming of the replay method has been understood, we
look for an other offline evaluation protocol that does not suffer
from the time acceleration issue. The idea we propose stems from the
idea of bootstrapping, introduced by \cite{citeulike:2825416}. Thus
let us remind the standard bootstrap approach. Basically, the idea is to compute
the empirical distribution $\hat{\D}_T$ 
%{\color{red}\sout{(in order to study its properties)}}
 of an estimator $\hat{\theta}_T$ computed on $T$
observations. To do so, one only has access to a dataset $S$ of size
$T$. Therefore $B$ new datasets $S_1,... S_B$ of size $T$ are generated by making
bootstrap resamples from $S$.
% In the non-parametric case that we
%consider here 
%and {\color{red}pas clair :}works regardless of the assumptions the data
%allows{\color{red}jusque là}, 
A bootstrap resample is generated via drawing $T$ samples with
replacement. Note that this bootstrap procedure is a way to approximate $\D$,
the underlying distribution of the data. Computing $\hat{\theta}_T$ on all the $S_i$ yields %{\color{red}je suis perdu dans les notations : c'est quoi ce $Q$ (c'est pas $P$ cf. ci-dessus ?)}
$\widehat{CTR}(T)$, an estimation of $CTR(T)$. From a theoretical point of view and under mild
assumptions,  $\xi\left( \widehat{CTR}(T) \right)$ converges with no bias 
%to $\xi\left(Q_t\right)$
 at a speed in $O(1/T)$.
%{\color{red}pour moi $\xi$ n'est pas défini préciséement ; on a dit ``estimator quality assessment $\xi$'', sans plus, ça veut dire quoi ?}
This means that under a assumption of the concentration speed of a statistic we are able to estimate the confidence interval of the mean of the statistic much faster than its mean.
Recall that $\xi$ can be any measure of accuracy (defined in terms of bias, variance, confidence intervals, \ldots) over the statistic we want to study. Here we are interested in confidence intervals over $CTR_A(T)$.
\begin{algorithm}[h!]
  \caption{\textit{Bootstrapped Replay on Expanded Data (BRED)}.
    \newline
    We sketch this algorithm so that it looks very much like
    \textit{replay} in \cite{LiCLW11}. Thus an algorithm is a
    function that maps an history of events $h^{(b)}$ to a policy
    $\pi$ which itself maps contexts to actions. This makes the learning
    process of the algorithm appear clearly. A computationally efficient
    implementation would be slightly different. Notice also that for clarity,
    we only compute $\hat{g}_\algo(T)$ and omit ${\widehat{CTR}}_\algo(T)$.
}
  \underline{Input}
  \begin{itemize}
    \item A (contextual) bandit algorithm $\algo$
    \item A dataset $S$ of $T$ triplets $(x, a, r)$
    \item An integer $B$% (bootstrap parameter)
  \end{itemize}
  \underline{Output}: An estimate of $g_\algo$\\
  \begin{algorithmic}
    \STATE $h^{(b)} \leftarrow \emptyset$, $\forall b \in \left\{1..B\right\}$  \textit{ /*empty history*/}
    \STATE $\widehat{G}_{\algo}^{(b)} \leftarrow 0$, $\forall b \in \left\{1..B\right\}$ 
    \STATE $T^{(b)} \leftarrow 0$, $\forall b \in \left\{1..B\right\}$
    \STATE \textit{/* Bootstrap loop*/} 
    \FOR{$b \in \left\{1..B\right\}$}
      \STATE \textit{/* estimation of $CTR_\algo^{(b)}(T)$*/}
      \FOR{$i \in \left\{1..T\times{}K\right\}$ }
        \STATE Sample with replacement $(x,a,r)$ of $S$
        \STATE $x \leftarrow$ JITTER$(x)$ \textit{ /*optional*/}
        \STATE $\pi \gets \algo (h^{(b)})$
        \IF{$\pi(x) = a$}
          \STATE add $(x,a,r)$ to $h^{(b)}$
          \STATE $\widehat{G}_\algo^{(b)} \leftarrow \widehat{G}_\algo^{(b)} + r$
          \STATE $T^{(b)} \leftarrow T^{(b)}+1$
%        \Else
%	  \State /* Do nothing. */
      \ENDIF
    \ENDFOR
  \ENDFOR
  \STATE \textbf{return} $\frac{1}{B} \sum\limits_{b=1}^B \frac{\widehat{G}_{\algo}^{(b)}}{T^{(b)}}$
  \end{algorithmic}
  \label{bred}
\end{algorithm}
The core idea of the evaluation protocol we propose in this paper is
inspired by bootstrapping and inherits its theoretical
properties. Using our notations, here is the description of this new
method. From a dataset of size $T$ with $K$ possible choices of action
at each step - we do not require $K$ to be constant over time -,% {\color{red}peut être dire que $K$ peut changer au cours du temps ?}
we generate $B$ datasets of size $K.T$ by sampling with
replacement, following the non-parametric bootstrap procedure. Then
for each dataset $S_b$ we
use the classical replay method to compute an estimate
$\hat{g_\algo}^{(b)}(T)$.
Therefore $\algo$ is evaluated on $T$ records
on average. This step can be seen as a subsampling step that allows to return in
the classical bootstrap setting. Thus note that it would not work for a purely
deterministic policy, that for obvious reason would not take advantage
of the data expansion (an assumption in the formal analysis will reflect
this fact). $\hat{g}_\algo(T)$ is given by averaging the
$\hat{g_\algo}^{(b)}(T)$. Together, the $\hat{g_\algo}^{(b)}(T)$ are
also an estimation of
$CTR_\algo(T)$, the distribution of the CTR of $\algo$ after $T$ interactions
with $\D$ on which we can compute our estimator quality assessment
$\xi$. More formally, the bootstrap estimator of the density of
$CTR_\algo(T)$ is estimated as follows:
$$
{\widehat {CTR}_\algo(T)}(x) = \frac1B \sum_{b=1}^{B} I\left(\sqrt{T} \left[
    \frac{\hat{g}_\algo^{(b)}(T) - \hat{g}_\algo(T)}{\hat{\sigma}_\algo(T)}
    \right] \leq  x\right)
$$ 
where $\hat{\sigma}_\algo(T)$ is the empirical standard deviation
obtained when computing the bootstrap estimates $\hat{g_\algo}^{(b)}(T)$. 
The complete procedure, called Bootstrapped Replay on Expanded
Data (BRED), is implemented in algorithm \ref{bred}.

To complete the BRED procedure, one last detail is necessary. Each
record of the original dataset $S$ is contained $K$ times in
expectation in each expanded dataset $S_b$. Therefore a learning
algorithm may tend to overfit which would bias the estimator. To
prevent this from happening, we introduce a small amount of Gaussian
noise on the contexts. This technique is known as jittering and is
well studied in the neural network field
\cite{citeulike:2512648}. The goal is the same, that is avoiding
overfitting. In practice however it is slightly different as neural network are generally not learning online but on batches of data, each data being used several times during learning. In bootstrapping theory this technique is known as the smoothed
bootstrap and was introduced by \cite{silverman1987bootstrap}. We
mentioned that the bootstrap %{\color{red}comprends pas ici (grammaire ? accord des verbes ?)} 
resampling is a way to approximate
$\D$. The smoothed bootstrap goes further by performing a kernel
density estimation (KDE) of the data and sampling from it. Sampling from a
KDE of the data where the kernel is Gaussian of bandwidth $h$ is
equivalent to sampling a record uniformly from $S$ and applying a random noise
sampled from ${\cal N}(0, h^2)$, which is what jittering does. The usual purpose
of doing so in statistics is to get a less noisy empirical distribution for the
estimator. Note that here we perform a partially smoothed bootstrap as
we only apply a KDE on the part of $\D$ that generates the contexts.

\section{Theoretical analysis}

In this section, we make a theoretical analysis of our evaluation
method BRED. The core loop in BRED is a bootstrap loop; henceforth, to
complete this analysis, we first restate the theorem \ref{theo:1}
which is a standard result of the bootstrap asymptotic analysis
\cite{blb}. Notice a small detail: each bootstrap step estimates a
realization of $CTR_\algo(T)$. The number of evaluations - which is also the number of non rejects - is a random
variable denoted $T^{(b)}$.

\begin{theorem}\label{theo:1}
  Suppose that:
  \begin{itemize}
    \item $\algo$ is a recommendation algorithm which generates a fixed
      policy over time (this hypothesis can be weakened as
      discussed in remark 2),
    \item $K$ items may be recommended at each time step,
    \item $\xi\left(CTR_\algo(T)\right)$ admits an expansion as an asymptotic series 
      $$
        \xi\left(CTR_\algo(T)\right) = z + \frac{p_1}{\sqrt{T}}+\ldots+ \frac{p_\alpha}{T^{\alpha/2}} +o\left(\frac1{T^{\alpha/2}}  \right)
      $$
      where $z$ is a constant independent of the distribution $\D$
      (as defined in Sec.\@ \ref{cbandits}), and the $p_i$ are
      polynomials in the moments
%% \pp{tu veux dire que $p_i$ est un polynôme du
%%    $i^e$ moment de $CTR_\algo(T)$ n'est ce pas ?} \jm{presque, un
%%    polynôme des moments de $CTR_\algo(T)$ pas seulement du $i^e$}
      of $CTR_\algo(T)$ under $\D$ (this hypothesis is
      discussed and explained in remark 1),
    \item The empirical estimator $\xi\left( {\widehat
       CTR}_\algo(T)\right)$ admits a similar expansion: 
      \begin{equation} 
	\begin{split}
    %\widehat{CTR}_A^{(b)}(T^{(b)}) = \\
	  z + \frac{\widehat p_1}{\sqrt{T}}+\ldots+ \frac{\widehat p_\alpha}{T^{\alpha/2}} +O\left(\frac1{T^{\alpha/2}}  \right).
	\end{split}
      \end{equation}
  \end{itemize}
  Then, for $T\leq T^{(b)} \times{} K$ and assuming finite first and second moments of $\widehat{CTR}_\algo(T)$, with high probability:
  \begin{equation}
    \begin{split}
      \left|   \xi\left( CTR_\algo(T) \right) - \xi\left( {\widehat CTR}_\algo(T)\right)   \right| = \\
      O\left(\frac{\mbox{Var}( \widehat p_\alpha^{(1)} - p_\alpha|D_T)}{\sqrt{T\cdot B}}\right) +
      O\left(\frac1{T}\right) + O\left(\frac1{T \sqrt{T}}\right)
    \end{split}
    \label{boosteqn}
  \end{equation}
  where $\D_T$ is the resampled distribution of $\D$ using $T$ realizations. 
  \label{boot}
\end{theorem}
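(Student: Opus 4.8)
The plan is to reduce the statement to the classical Edgeworth-expansion analysis of the bootstrap, treating the $K$-fold data expansion followed by replay as a subsampling step that returns us to the iid bootstrap regime of \cite{blb}. First I would subtract the two assumed expansions. Since the leading coefficient $z$ is independent of $\D$ and appears identically in both, it cancels, leaving
$$
\xi\left(CTR_\algo(T)\right) - \xi\left(\wh{CTR}_\algo(T)\right) = \sum_{j=1}^{\alpha}\frac{p_j - \wh p_j}{T^{j/2}} + O\!\left(\frac{1}{T^{\alpha/2}}\right),
$$
where the final remainder collects the $o(T^{-\alpha/2})$ and $O(T^{-\alpha/2})$ tails of the two series. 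The whole argument then amounts to controlling each coefficient discrepancy $p_j - \wh p_j$.

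Second, I would exploit that each $p_j$ is a fixed polynomial in the moments of $CTR_\algo(T)$ and each $\wh p_j$ is the same polynomial in the moments of $\wh{CTR}_\algo(T)$. The static-policy hypothesis is what makes this work: when $\algo$ outputs a fixed $\pi$, the per-trial payoffs aggregated by replay are exchangeable draws, so $CTR_\algo(T)$ is asymptotically the law of a normalized sum of iid contributions and $\wh{CTR}_\algo(T)$ is its genuine bootstrap analogue. Under the finite first- and second-moment assumption, the empirical moments produced by resampling deviate from the true moments by $O_p(1/\sqrt T)$ (CLT / Chebyshev), whence for the ideal (infinite-$B$) bootstrap $\wh p_j - p_j = O_p(1/\sqrt T)$. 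The leading term $j=1$ therefore contributes $(p_1-\wh p_1)/\sqrt T = O_p(1/T)$; this is the source of the $O(1/T)$ summand and is precisely the one-order gain of the bootstrap over the plain normal approximation. The $j=2$ term yields the $O(1/(T\sqrt T))$ summand, into which the tail from the first display is also absorbed.

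Third, I would isolate the Monte Carlo error coming from using only $B$ resamples. The empirical coefficient is an average $\wh p_\alpha = \tfrac1B\sum_{b=1}^B \wh p_\alpha^{(b)}$ over the $B$ replicates, which, conditionally on $\D_T$, are iid. Applying a conditional CLT (or Chebyshev) to this average, its fluctuation around the ideal bootstrap value is of order $\sqrt{\mbox{Var}(\wh p_\alpha^{(1)}-p_\alpha \mid \D_T)/B}$; carrying the $T^{-1/2}$ factor from its position in the expansion produces exactly the term $O\!\left(\mbox{Var}(\wh p_\alpha^{(1)}-p_\alpha\mid \D_T)/\sqrt{T\cdot B}\right)$. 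Collecting the three contributions --- Monte Carlo in $1/\sqrt{T B}$, leading bootstrap bias in $1/T$, and higher-order remainder in $1/T^{3/2}$ --- yields the claimed bound, the ``with high probability'' qualifier being inherited from the concentration inequalities used for the empirical-moment deviations.

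The main obstacle I expect is the very first reduction: justifying that replay on the $K$-fold expanded dataset genuinely returns us to the classical iid bootstrap setting, so that $\wh{CTR}_\algo(T)$ admits an Edgeworth expansion with the \emph{same} leading coefficient $z$ as $CTR_\algo(T)$. This requires controlling the random evaluation count $T^{(b)}$: since uniform logging over $K$ actions matches the replayed action with probability $1/K$, replaying $K\cdot T$ resampled records yields $T^{(b)}\approx T$ non-rejects, so the condition $T \leq T^{(b)}\cdot K$ guarantees each replicate is effectively an estimate at the target horizon $T$ rather than at $T/K$. Making the Edgeworth bookkeeping rigorous with this random, data-dependent sample size --- and checking that the jittering / smoothed-bootstrap step perturbs only lower-order coefficients and not $z$ --- is where the delicate work lies; once $T^{(b)}$ is shown to concentrate around $T$, the remaining steps are routine bootstrap accounting.
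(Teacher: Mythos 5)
Your proposal is correct and follows essentially the same route as the paper, whose entire proof is a one-line deferral to Theorem 3 of the Bag-of-Little-Bootstraps paper and to Efron's original convergence result: the argument there is precisely your Edgeworth-expansion comparison (cancel the common leading term $z$, bound the coefficient discrepancies $p_j-\wh p_j$ by moment concentration to get the $O(1/T)$ and $O(1/(T\sqrt T))$ terms, and isolate the finite-$B$ Monte Carlo fluctuation to get the variance term). You in fact supply more detail than the paper does, and you correctly flag that the concentration of the random evaluation count $T^{(b)}$ is the genuinely new ingredient --- the paper defers that work to the proof of its Theorem 2 rather than handling it here.
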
 
\begin{proof}
  it is actually a straightforward adaptation of the proof of theorem
  3 of \cite{blb}. % \oli{Indeed, we only have to take care of
    %$T$. j'enlève la phrase non ?}
	 Also note
  that this theorem is a reformulation of the bootstrap main
  convergence result as introduced by \cite{citeulike:2825416}.
\end{proof}

Now, we use theorem \ref{theo:1} to bound the error made by BRED in
the theorem \ref{theo:2}.

\begin{theorem}\label{theo:2}
  Assuming that
  $$
    \xi\left( CTR_\algo(T) \right)= 
	z + \frac{p_1}{\sqrt{T}}+\ldots+ \frac{p_\alpha}{T^{\alpha/2}} +o\left(\frac1{T^{\alpha/2}}  \right)
  $$

  Then for algorithm $\algo$ producing a fixed policy over time,
  BRED applied on a dataset of size $T$ evaluates the expectation of
  the $CTR_\algo$ with no bias and with high probability for $B$
  and $T$ large enough:

  $$
  \left|   \xi\left( CTR_\algo(T) \right) - \xi\left( {\widehat CTR}_\algo(T)\right)   \right| = O\left(\frac{1}{T}\right)
  $$
  This means that the convergence of the estimator of $\xi\left(
CTR_\algo(T) \right)$ is much faster than the convergence of the
estimator of $g_\algo(T)$ (which is in $O(1/\sqrt{T})$. This will
allow a nice control of the risk that $\hat{g}_\algo(T)$ may be badly
evaluated.
\label{BREDproof}
\end{theorem}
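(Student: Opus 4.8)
The plan is to obtain the $O(1/T)$ rate as a corollary of Theorem~\ref{theo:1}, after first verifying that the expansion-and-replay mechanism of BRED genuinely places us in the bootstrap regime that theorem describes. First I would show that, under the fixed-policy hypothesis, BRED evaluates $\algo$ at the correct horizon $T$ rather than $T/K$. Because $S$ was collected by the uniform exploration policy and $\algo(h^{(b)})$ equals the same policy $\pi$ at every step, each of the $K\cdot T$ resampled records in an expanded dataset is retained by replay exactly when $\pi(x)=a$, an event of probability $1/K$ that is, conditionally on $S$, independent across resampled records. Hence $T^{(b)}$ is a sum of $K\cdot T$ independent $\Ber(1/K)$ indicators with $\E[T^{(b)}]=T$, and a Hoeffding/Bernstein bound gives $T^{(b)}=T+O_P(\sqrt T)$. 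In particular $T\leq T^{(b)}\cdot K$ holds with high probability, which is exactly the hypothesis of Theorem~\ref{theo:1}; and since the effective number of replay steps concentrates at $T$, the estimated object is $CTR_\algo(T)$, not $CTR_\algo(T/K)$. This is what yields the claimed absence of time-acceleration bias.

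With Theorem~\ref{theo:1} applicable, the rest is bookkeeping on its error bound
$$
O\!\left(\frac{\mathrm{Var}\big(\widehat p_\alpha^{(1)}-p_\alpha \mid \D_T\big)}{\sqrt{T\cdot B}}\right)+O\!\left(\frac1T\right)+O\!\left(\frac1{T\sqrt T}\right).
$$
The third term is $O(T^{-3/2})$ and is dominated by the second, so it may be discarded. For the first term I would use the assumption of finite first and second moments of $\widehat{CTR}_\algo(T)$: since each $p_i$ is a polynomial in the moments of $CTR_\algo(T)$, the conditional variance $\mathrm{Var}(\widehat p_\alpha^{(1)}-p_\alpha\mid\D_T)$ is $O(1)$ with high probability. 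Taking $B$ large enough that $B\geq T$ then gives $1/\sqrt{T\cdot B}\leq 1/T$, so the resampling (Monte-Carlo) term is itself $O(1/T)$. Combining the three contributions proves the stated bound.

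Conceptually, the reason the rate is one order better than the $O(1/\sqrt T)$ convergence of $\hat g_\algo(T)$ is the Edgeworth-type cancellation already encoded in the two matching asymptotic expansions of $\xi$: subtracting them annihilates the distribution-free constant $z$, leaving $(\widehat p_1-p_1)/\sqrt T$ as the leading discrepancy. As $\widehat p_1$ and $p_1$ are the same smooth polynomial evaluated at the empirical and the true moments, and the empirical moments concentrate at rate $O_P(1/\sqrt T)$, a Taylor/delta-method step gives $\widehat p_1-p_1=O_P(1/\sqrt T)$, so that $(\widehat p_1-p_1)/\sqrt T=O_P(1/T)$. I expect the main obstacle to lie not in this cancellation but in controlling the first term: one must ensure that the smoothed (jittered) resampling keeps the retained-action law non-degenerate --- the role of the ``fixed but non-deterministic policy'' assumption --- so that the conditional variance is genuinely bounded and the choice $B\geq T$ really does push the resampling error below $O(1/T)$.
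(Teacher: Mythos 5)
Your proposal follows essentially the same route as the paper's proof: show that $T^{(b)}$ concentrates around $T$ via a Chernoff/Hoeffding bound (so that replay on the expanded dataset evaluates $CTR_\algo(T)$ rather than $CTR_\algo(T/K)$ and the hypothesis $T\leq T^{(b)}\times K$ of Theorem~\ref{theo:1} holds with high probability), argue that the empirical estimator admits the required expansion in powers of $1/\sqrt{T}$, and then invoke Theorem~\ref{theo:1} with $B$ large enough to suppress the Monte-Carlo term. The only differences are of emphasis: the paper devotes most of its proof to Chernoff bounds on $\widehat G_A^{\alpha}/(T^{(b)})^{\alpha}$ at every order $\alpha$ to justify the expansion of $\xi\bigl(\widehat{CTR}_\algo(T^{(b)})\bigr)$, where you give a one-line delta-method argument, while you are more explicit than the paper about the required number of bootstrap iterations (namely $B\geq T$ so that $1/\sqrt{TB}\leq 1/T$), which the paper leaves as ``$B$ large enough.''
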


The sketch of the proof of theorem \ref{theo:2} is the following:
first we prove that the replay strategy is able to estimate the
moments of the distribution of $CTR_\algo$ fast enough with respect to
$T$. The second step consists in using classical
results from bootstrap theory to guarantee the unbiased convergence of
the aggregation $\widehat{CTR}_A(T)$ to the true distribution with
an $O(\frac1{T})$ speed. The rational behind this is that the
gap introduced by the subsampling will be of the order of
$O(\frac1{\sqrt{T B}})$.

\begin{proof}

At each iteration of the bootstrap loop (indexed by $b$), BRED is
estimating the CTR using the replay method on a dataset of size $T' =
K \times{} T$. As the actions in $S$ were chosen uniformly at random, we have $\E(T^{(b)}) = T' / K = T$.

%% As the recommendation algorithm $\algo$ makes choices independently of
%% previous steps in the history and the actions in the stream have been
%% chosen uniformly at random while collecting the data, 
As the policy is fixed, we can use the multiplicative Chernoff's bound
as in \cite{LiCLW11} to obtain for all bootstrap step $b$:

$$
\proba{} \left( \left| T^{(b)} - T \right| \geq {\gamma_1 T}\right)
\leq \exp\left( - \frac{T \gamma_1^2}{3} \right) 
$$

for any $\gamma_1>0$ (where $\proba(e)$ denotes the probability of
event $e$). A similar inequality can be obtained with $\E(\widehat
G_A)=T g_A$:

$$
\proba{} \left( \left| \widehat G_A - { T g_A} \right| \geq {\gamma_2 T g_A}\right) \leq \exp\left( - \frac{T g_A\gamma_2^2}{3} \right) 
$$

Thus with $\gamma_1 = \sqrt{\frac{3}{T} \ln \frac4\delta}$ and $\gamma_2 = \sqrt{\frac{3}{T g_A} \ln \frac4\delta} $ 
using a union bound over probabilities, we have with probability at least $1-\delta$: 
$$
{1-\gamma_1} \leq \frac {T^{(b)}}{T} \leq  {1+\gamma_1} 
$$
$$
g_A {1-\gamma_2} \leq \frac {\widehat G_A}{T} \leq g_A  {1+\gamma_2} 
$$
which implies 
$$
\left| \frac{\widehat G_A}{T^{(b)}} - g_A \right| \leq \frac{(\gamma_1 + \gamma_2)g_A}{1-\gamma_1} = O\left( \sqrt{\frac{ g_a}{T}} \ln \frac1\delta\right)
$$
So with high probability the first moment of $\widehat{CTR}_A(T^{(b)})$ as estimated by the replay method admits an asymptotic expansion in
 $1/\sqrt{\E(T^{(b)})} = 1/T$.

Now we need to focus on higher order terms. All the moments are finite because the reward distribution over $\knownRatings$ is bounded. 
Recall that by hypothesis $\xi \left( CTR_\algo(T) \right) $ admits a $\alpha^{{th}}$ order term:
$$ 
p_\alpha = \E_D \left( CTR_\algo(T^{(b)})^\alpha \right)
$$ 

The Chernoff's bound can be applied to $|(T^{(b)})^\alpha-T^\alpha|$ and $|\widehat G_A^\alpha - T^\alpha g_A^\alpha|$ leading  to 

$$
\left| \frac{\widehat G_A^\alpha}{(T^{(b)})^\alpha} - g_A^\alpha \right| = O\left( {\left(\frac{ g_a}{T} \right)}^{\frac{\alpha}2} \ln \frac1\delta\right)
$$

With probability at least $1-\delta$. So for a large enough $T^{(b)}$,
$\xi\left( \widehat{CTR}_A(T^{(b)}) \right)$ admits a expansion in
polynomials of $1/{\sqrt{T}}$.  Thus theorem 1 applies and the
aggregation of all the $\hat{g}^{(b)}_\algo(T^{(b)})$ allows an
estimation of $CTR_A(T)$. For a large enough number of bootstrap
iterations (the value of $B$ in BRED), we obtain a convergence speed
in $O(1/T)$ with high probability, which concludes the proof.
\end{proof}

After this analysis, we make two remarks about the assumptions that
were needed to establish the theorems.

\textit{Remark 1}: The key point of the theorems is the existence of an
asymptotic expansion of $CTR_\algo(T)$ and ${\widehat {CTR}}_\algo(T)$
in polynomials of $1/\sqrt{T}$. This is a natural hypothesis for $CTR_\algo(T)$ because the CTR
is an average of bounded variables (probabilities of click). Note that
the proof of theorem 2 shows that although $T^{(b)}$ is random the
expansion remains valid anyway. For a
contextual bandit algorithm $\algo$ producing a fixed policy, the mean is
going to concentrate according to the central limit theorem
(CLT). Furthermore this hypothesis, omnipresent in bootstrap theory
\cite{citeulike:2825416}, is for instance justified in econometrics by the fact that all
the common estimators respect it \cite{horowitz2001bootstrap}. Yet
this assumption is not verified for a static deterministic policy.

\begin{figure}[b!]
  \includegraphics[width=0.47\textwidth]{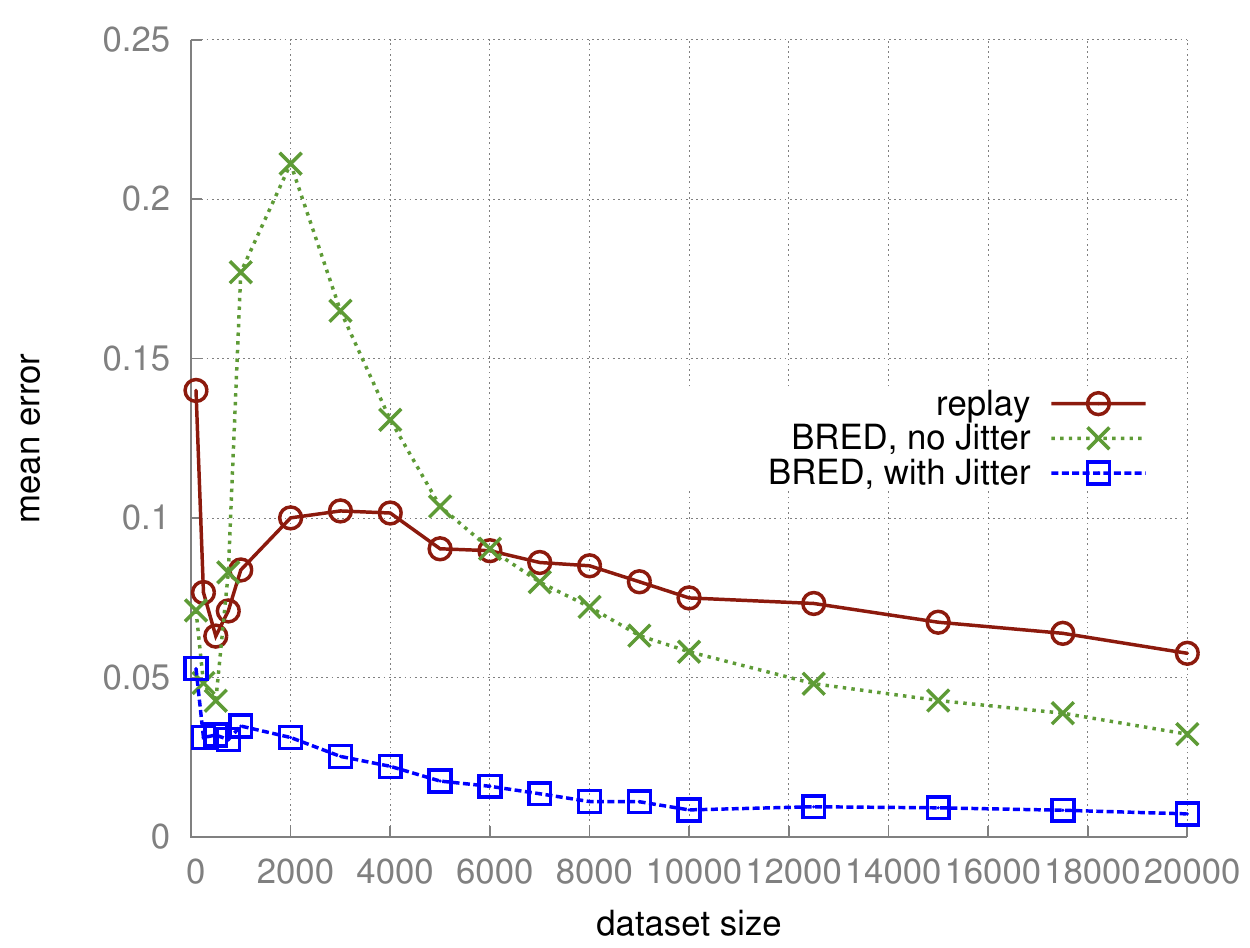}
  \caption{Mean absolute error of the CTR of LinUCB estimated by different
    methodologies. Conducted on artificial datasets as described by
    section \ref{sec:artif}. The lower, the better. Jittering
    ($h=\frac{50}{\sqrt{T}}$ here) is
    actually efficient to avoid overfitting issues. The rather small
    error rates for very small datasets are due to the fact that on
    too small datasets all the recommender algorithm tend to make
    random choices which are not very hard to evaluate.
  %Estimated CTR by \textit{BRED with $Jitter = \mathcal{N}(0,\frac{3}{4})$} on a 10k long dataset compared to 10k evaluation on the actual model.
  }
  \label{artif-LinUCB-fig}
\end{figure}

\textit{Remark 2} Let us consider algorithms that produce a policy
which changes over time (a learning algorithm in particular). After a
sufficient amount of recommendations, a reasonable enough algorithm
will produce a policy that will not change any longer (if the world is
stationary).  Thus again, the CLT will apply and we will observe a
convergence of $\hat{g}_\algo(T)$ to its limit in
$1/\sqrt{T}$. Nevertheless nothing holds true here when the algorithm
is actually learning. This is due to the fact that the Chernoff bound
no longer applies as the steps are not independent. However the
behavior of classical learning algorithms are smooth, especially when
randomized (see \cite{exp3} for an example of a randomized version of
UCB). \cite{LiCLW11} argue that in this case convergence bounds may be derived for replay (which then could be applied to BRED) at the cost of a
much more complicated analysis including smoothness assumptions. For
non reasonable algorithms and thus in the general case, no guarantees
can be provided. By the way note that a very intuitive way to justify
Jittering is to consider that it helps the Chernoff bound being "more
true" in the case of a learning algorithm.

%\section{Experiments}

\section{Experiments in realistic settings}
\label{expes}
As we proved that BRED has promising theoretical guarantees in the setting introduced in \cite{LiCLW11}, let us now compare its empirical performance to that of the replay method 

\subsection{Synthetic data and discussing Jittering}
\label{sec:artif}

The first set of experiments was run on synthetic data. Indeed, we needed to be able to compare the errors of estimation of the two methods on various fixed size datasets relatively to the ground truth: an evaluation against the model itself.
Before going any further, let us describe the model we used. It is a linear model with Gaussian noise (as in \cite{LinUCB}) and was built as follows:
\begin{itemize}
  \item a fixed action set (or news set) of size $K=10$.
  \item The context space is $\mathcal{X} = \R^{15}$. Each context $x$ is generated as a sum $c+n$ where $c \sim
\mathcal{N}(0,1)$ and $n \sim \mathcal{N}(0,\frac{1}{2})$.
  \item The CTR of a news $i$ displayed in a context $x$ is given
linearly by $q_i+w_i^Tc$. Note the non-contextual element $q_i$ and
that the noise $n$ is ignored.
  \item Finally there are two kinds of news: (i) 4 ``universal'' news
that are interesting in general like \textit{Obama is re-elected} and
for which $q_i \sim \mathcal{U}(0.4,0.5)$ is high and
$w_i=0_{15}$. (ii) 6 specific news like \textit{New Linux distribution
released} for which $q_i \sim \mathcal{U}(0.1,0.2)$ is low and $w_i$
consists of zeros except for a number $m$ of relevant weights sampled
from $\mathcal{N}(0,\frac{1}{5})$.
\end{itemize}
 
A non contextual approach would perform decently by quickly learning
the $q_i$ values. Yet LinUCB \cite{LinUCB}, a contextual bandit
algorithm will do better by learning when to recommend the specific
news. Figure \ref{artif-LinUCB-fig} displays the
results and interpretation of an experiment which consists in
evaluating LinUCB($\alpha=1$) using the different methods. It is
clear that BRED converges much faster than the replay method.

\emph{Remark}: As it can be seen on Figure \ref{artif-LinUCB-fig},
jittering is very important to obtain good performance when evaluating
a learning algorithm. Empirically, a good choice for the level of
jitter seems to be a function in $O\left(\frac{1}{\sqrt{T}}\right)$,
with $T$ the size of the dataset. Note that this is proportional to
the standard distribution of the posterior of the data. The results
confirm our intuition: jittering is very
important when the dataset is small but gets less and less necessary
as the dataset grows.

%\begin{figure}[htbp]
%  \includegraphics[width=0.47\textwidth]{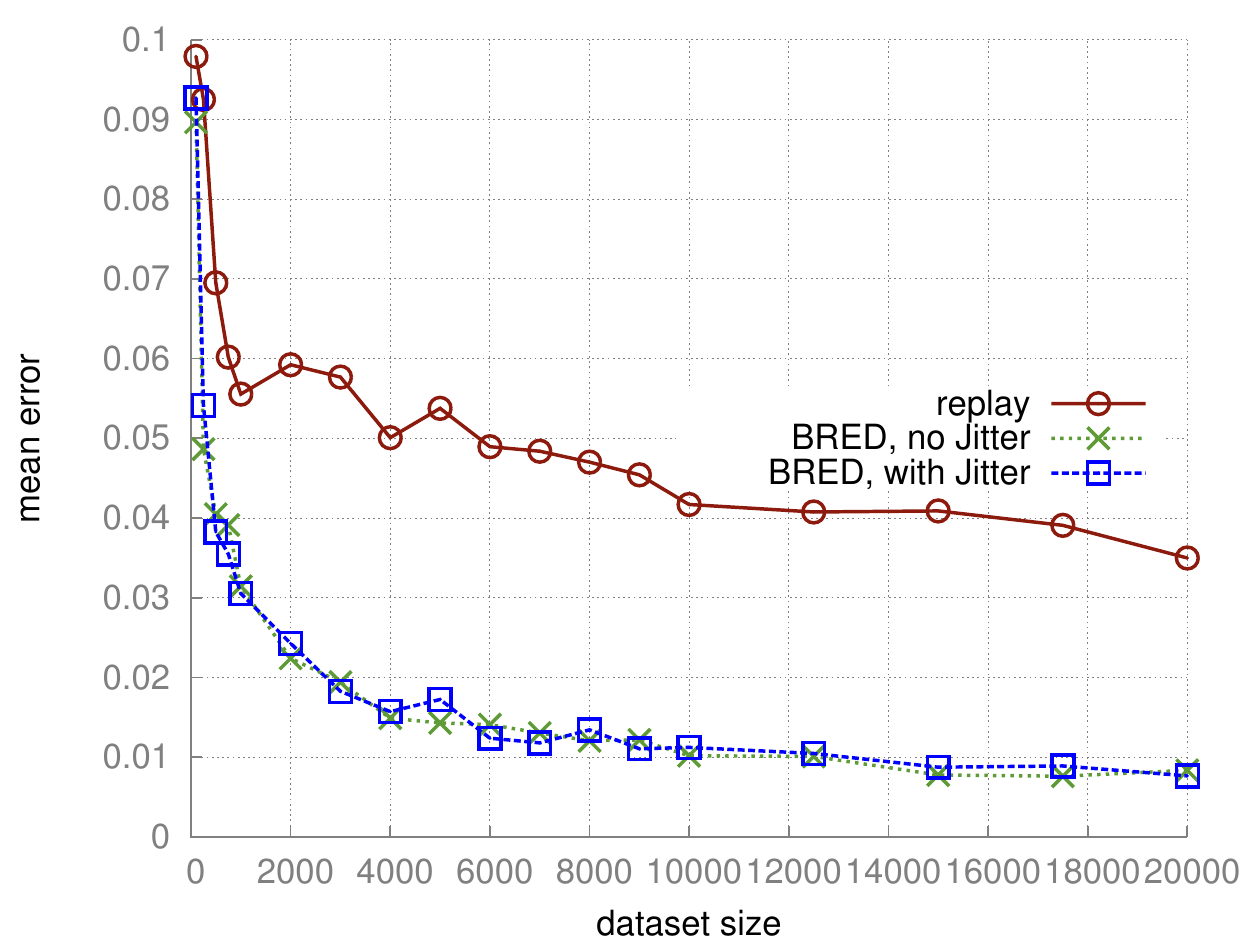}
%  \caption{%Estimated CTR by \textit{BRED without Jitter} on a 10k long dataset compared to 10k evaluation on the actual model. LinUCB (contextual) clearly overfits whereas it works perfectly for UCB(non-contextual)
%  {\color{red}\Huge figure 1 pas référencée dans le texte !!!}Mean of the absolute value of the difference between the true CTR of a UCB and the estimated one for different methodologies. Conducted on artificial dataset as described by section \ref{sec:artif}. The lower, the better. Jittering is useless here because UCB does not use the context.}
%  \label{artif-UCB-fig}
%\end{figure}

\subsection{Real data}
\label{sec:real}

Adapting replay to a real life dataset, corresponding to dynamic
recommendation is straightforward although it leads to biased
estimations. BRED really needs the assumption of a static world in
order to perform the bootstrap resamples. Therefore BRED needs to be
run on successive windows of the dataset on which a static assumption
can be made. This creates a bias/variance trade-off: if the windows
are too big, some dynamic properties of the world may be erased (bias). On
the contrary, too smalls window will lead to very variate bootstrap
estimates.
To simplify things, we ran experiments assuming a
static world on small portions of the Yahoo! R6B dataset. We actually
took the smallest number of portions such that a given portion has a
fixed pool of news ($\approx 630$ portions). This experiment is
similar to what is done in \cite{LiCLW11}: the authors measured the
error of the estimated CTR of UCB ($\alpha=1$) by the replay method on
datasets of various sizes relatively to what they call the ground
truth: an evaluation of the same algorithm on a real fraction of the
audience.  As we obviously cannot do that, we used a simple trick.
For each portion $i$ of size $T_i$ with $K_i$ news, we computed an
estimation of the ground truth $g_\algo\left(\frac{T_i}{K_i}\right)$
by averaging the estimated CTR of UCB using the replay method on 100
random permutations of the data.  For each portion the experiment
consists in subsampling $T_i/K_i$ records and evaluating UCB using
replay and BRED on this smaller dataset to estimate the ground truth
using less data, faking time acceleration.  The results and
interpretation are shown on Figure \ref{real-data-fig}: the better accuracy of BRED is very clearly illustrated.

\begin{figure}[htbp]
  \vspace{-1cm}
  \includegraphics[width=0.48\textwidth]{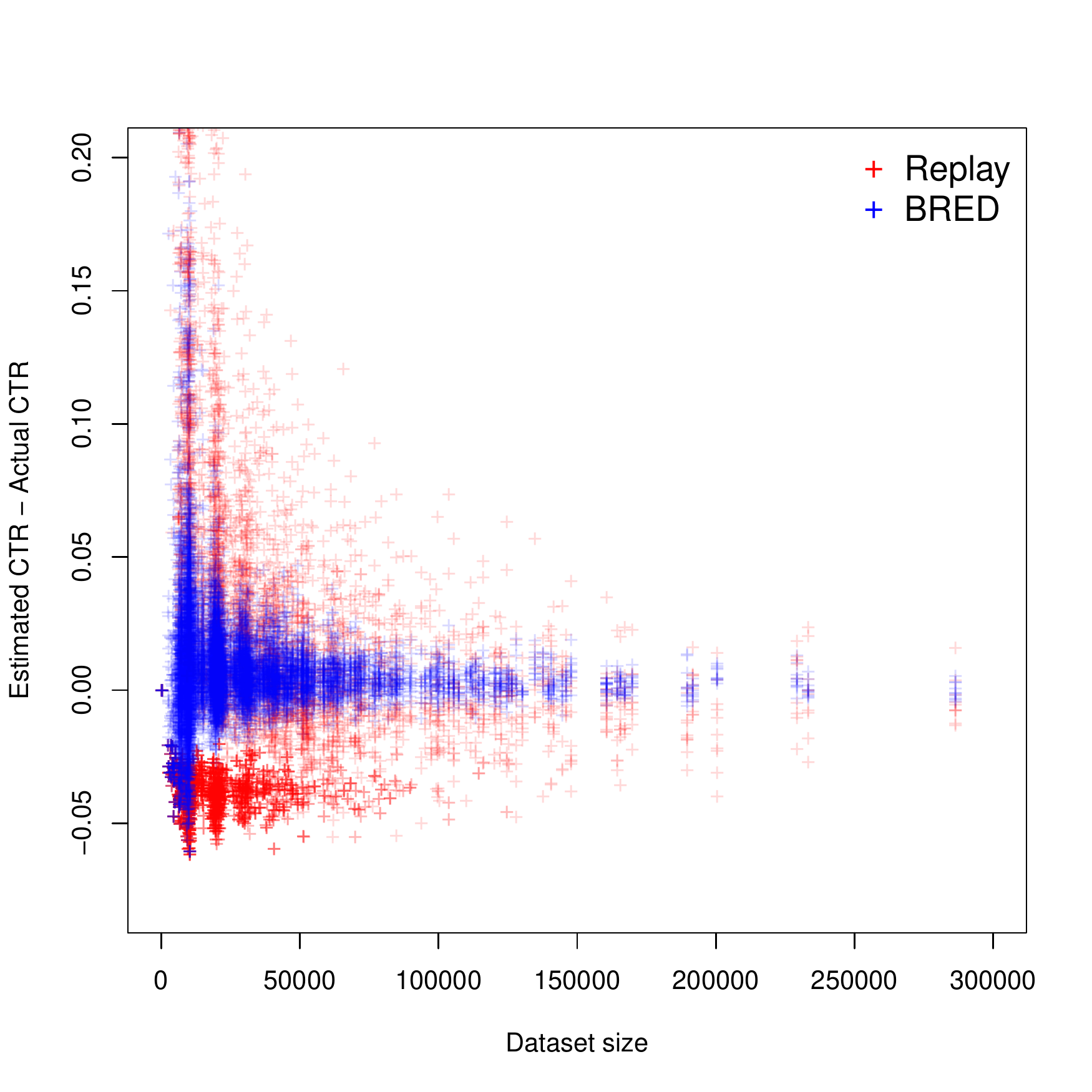}
  %\vspace{-0.3cm}
  \caption{The difference between the estimated CTR and the actual one
    on some batches extracted from the Yahoo! R6B dataset for a
    UCB. Batches are build as explained in section \ref{sec:real}. The
    closer to 0, the better. Please note that the replay method tends
    to under-estimate the true CTR for small batches. This is due to
    the fact that UCB does not have enough time to reach its
    actual CTR.
  %Estimated CTR by \textit{BRED with $Jitter = \mathcal{N}(0,\frac{3}{4})$} on a 10k long dataset compared to 10k evaluation on the actual model.
  }
  \label{real-data-fig}
\vspace{-0.4cm}
\end{figure}

\section{Conclusion}

In this paper, we studied the problem of recommendation system
evaluation, sticking to a realistic setting: we focused on obtaining a
methodology for practical offline evaluation, providing a good
estimate using a reasonable amount of data. Previous methods are
proved to be asymptotically unbiased with a low speed of convergence
on a static dataset, but yield counter-intuitive estimates of
performance on real datasets. Here, we introduce BRED, a method with a
much faster speed of convergence on static datasets (at the cost of
loosing unbiasedness) which allows it to be much more accurate on
dynamic data. Experiments demonstrated our point; they were performed
on a publicly available dataset made from Yahoo!\@ server logs and on
synthetic data presenting the time acceleration issue. This paper was
also meant to highlight the time acceleration issue and the misleading
results given by a careless evaluation of an algorithm. Finally our
method comes with a very desirable property in a context of minimizing
the risks entailed by putting online a new RS: an extremely accurate
estimation of the variability of the estimator it provides.
 
% \oli{le paragraphe sur le future work qu'on avait mis avant ne convient plus, je vais parler entre autres``d'amélioration itérative robuste/controle sans dégradation'' (ou terme du style) grâce aux intervales de confiance construits par bootstrap\ldots}

An interesting line of future work is the automatic selection of the
Jittering bandwidth. Note that this problem is extensively studied in
the context of KDE \cite{scott1992multivariate}. 

A possible extension of this work is to use BRED to build a "safe
controller". Indeed, when a company uses a recommendation system that
behaves according to a certain policy $\pi$ that reaches a certain
level of performance, the hope is that when changing the
recommendation algorithm, the performance will not drop. As an
extension of the work presented here, it is possible to collect some
data using the current policy $\pi$, compute small variations of $\pi$
with tight confidence intervals over their CTR and then replace the
current policy $\pi$ with the improved one. This may be seen as a kind
of ``gradient'' ascent of the CTR in the space of policies.

\bibliographystyle{icml2014}
\bibliography{biblio}

\newpage
\appendix

\begin{center}
  \begin{Large}
    Supplementary material
  \end{Large}
\end{center}

The detailed implementation of \textit{replay} using our notations is
given in algorithm \ref{replay}. Note that apart from notations, no
modification are made with regards to the original presentation in
\cite{Langford_ExploScav_08, LiCLW11}. Figure \ref{artif-UCB-fig} is
the same experiment as in section \ref{sec:artif} but with a
non-contextual algorithm UCB: this plot exhibits both the importance
of Jittering and the improvement brought by our method compared to the
state of the art.

\begin{figure}[htbp]
  \includegraphics[width=0.47\textwidth]{}
  \caption{%Estimated CTR by \textit{BRED without Jitter} on a 10k long dataset compared to 10k evaluation on the actual model. LinUCB (contextual) clearly overfits whereas it works perfectly for UCB(non-contextual)
  Mean of the absolute value of the difference between the true CTR of a UCB and the estimated one for different methodologies. Conducted on artificial dataset as described in the section 6.1 of the main paper. The lower, the better. Jittering is useless here because UCB does not use the context.}
  \label{artif-UCB-fig}
\end{figure}

\begin{algorithm}
  \caption{\textit{Replay method} \cite{Langford_ExploScav_08, LiCLW11}.
    \newline
    Remark: for the sake of the precision of the specification of the algorithm, we use a history $h$ which is the list of triplets $(x,a,r)$ that have yet been used to estimate the performance of the algorithm $\algo$. The goal is to avoid hiding internal information maintenance in $\algo$; a real implementation may be significantly different for the sake of efficiency, by learning incrementally.}
  \underline{Input}: 
  \begin{itemize}
    \item A contextual bandit algorithm $\algo$
    \item A set $S$ of $L$ triplets $(x, a, r)$
  \end{itemize}
  \underline{Output}: An estimate of $g_\algo$
  \begin{algorithmic}
    \STATE $h \leftarrow \emptyset$
    \STATE $\widehat{G}_{\algo} \leftarrow 0$
    \STATE $T \leftarrow 0$
    \FOR{$t \in \left\{1..L\right\}$}
      \STATE Get the $t$-th element $(x,a,r)$ of $S$
      \STATE $\pi \gets A (h)$
      \IF{$\pi(x) = a$}
        \STATE add $(x,a,r)$ to $h$
        \STATE $\widehat{G}_\algo \leftarrow \widehat{G}_\algo + r$
        \STATE $T \leftarrow T+1$
%      \Else
%        \State /* Do nothing. */
      \ENDIF
    \ENDFOR
    \STATE \textbf{return} $\frac{\widehat{G}_{\algo}}{T}$
  \end{algorithmic}
  \label{replay}
\end{algorithm}

\end{document}